\title{Hessian of Perplexity for Large Language Models by PyTorch autograd (Open Source)}
\author{Ivan Ilin}
\date{\today}
\definecolor{mydarkgreen}{RGB}{39,130,67}
\definecolor{mydarkred}{RGB}{192,25,25}
\definecolor{bgcolor}{rgb}{0.76,0.88,0.50}
\definecolor{bgcolor0}{rgb}{0.93,0.99,1}
\definecolor{bgcolor1}{rgb}{0.8,1,1}
\definecolor{bgcolor2}{rgb}{0.8,1,0.8}
\definecolor{bgcolor3}{rgb}{0.50,0.90,0.50}
\definecolor{mydarkgreen}{rgb}{39,130,67}
\definecolor{mydarkred}{rgb}{192,25,25}
\newcommand{\R}{\mathbb{R}} 
\newcommand{\N}{\mathbb{N}} 
\theoremstyle{plain}
\newtheorem{theorem}{Theorem}[section]
\newtheorem{corollary}[theorem]{Corollary}
\theoremstyle{definition}
\theoremstyle{remark}
\newcommand{\eqdef}{:=}
\newcommand{\vast}{\bBigg@{4}}
\newcommand{\del}[1]{}
\theoremstyle{plain}
\theoremstyle{definition}
\begin{document}
\maketitle

\begin{abstract} Computing the full Hessian matrix -- the matrix of second-order derivatives for an entire Large Language Model (LLM) is infeasible due to its sheer size. In this technical report, we aim to provide a comprehensive guide on how to accurately compute at least a small portion of the Hessian for LLMs using PyTorch autograd library. We also demonstrate how to compute the full diagonal of the Hessian matrix using multiple samples of vector-Hessian Products (HVPs). We hope that both this guide and the accompanying GitHub code will be valuable resources for practitioners and researchers interested in better understanding the behavior and structure of the Hessian in LLMs. \href{https://github.com/vectozavr/llm-hessian}{\color{purple}{https://github.com/vectozavr/llm-hessian}}
\end{abstract}

\tableofcontents
\newpage

\section{Introduction}

\subsection{Motivation}
\label{sec:motivation}

Second-order information, encapsulated in the Hessian matrix, is a valuable tool for understanding the optimization landscape of Large Language Models (LLMs). However, computing the full Hessian exactly is infeasible due to memory and hardware constraints. As a result, practical optimization algorithms often rely on approximations, such as Quasi-Newton methods \citep{dennis1977quasi, broyden1967quasi} and momentum-based approaches \citep{kingma2014adam, liu2020improved}, which estimate the Hessian rather than computing it explicitly. Despite these limitations, access to the exact Hessian can be crucial for scientific inquiry, offering deeper insights into the structure and behavior of LLMs.
For instance, this report is part of a broader study on quantization \citep{malinovskii2024pushing}, where the Hessian matrix was analyzed to justify the assumption of its diagonal structure. Our empirical observations confirmed that the Hessian is approximately diagonal, reinforcing the validity of this assumption. In this technical report, we provide a comprehensive guide to computing the exact Hessian for a small subset of LLM parameters, offering insights into its practical computation and applications.

\subsection{Goals of this technical report}
\label{sec:contribution}

We now summarize the key goals of the current report.

\begin{enumerate}
\item Provide a comprehensive guide on how to compute the Hessian matrix by the use of automatic differentiation (AD) and the function \texttt{\color{blue}torch.autograd.functional.hessian} implemented in PyTorch \citep{paszke2019pytorch}.
\item Provide a method to estimate the diagonal elements of the Hessian using PyTorch Vector-Hessian Product function \texttt{\color{blue}torch.autograd.functional.vhp} and Hutchinson's trick \citep{hutchinson1989stochastic, bekas2007estimator}.
\item Provide the efficient implementation of the Hessian computation for different subsets of parameters. Code is available at \href{https://github.com/vectozavr/llm-hessian}{\color{purple}{https://github.com/vectozavr/llm-hessian}}.
\end{enumerate}

\subsection{Notation} All key notation used in this paper is summarized in a tabular form in \Cref{sec:notation}; see Table~\ref{tab:notation_table}.

\section{Background and Related Work: Computing the Hessian Matrix}

The \textbf{Hessian matrix} of a scalar function \(\phi: \mathbb{R}^d \to \mathbb{R}\) is the $d\times d$ matrix of second-order partial derivatives. In coordinates, the Hessian is defined by \(H_{ij}({\bf w}) = \frac{\partial^2 \phi}{\partial [{\bf w}]_i \partial [{\bf w}]_j}({\bf w})\), where $i, j \in \{1, \cdots, d\}$, $[{\bf w}]_i$ is the $i^{\text{th}}$ elements of the vector ${\bf w} \in \R^d$.

For any vector ${\bf x}$, we denote $[{\bf x}]_i$ as the $i^{\text{th}}$ element of the vector ${\bf x}$. Similarly, for any matrix $A$, we denote $[A]_{ij}$ as the element of $A$ in the intersection of the $i^{\text{th}}$ row and $j^{\text{th}}$ column.

\subsection{Finite Difference Approximation (Simple, but not robust)}

One straightforward way to obtain Hessian entries is via \textbf{finite difference methods}, which use function evaluations at perturbed points to approximate derivatives. For example, for two indices $i$ and $j$, a second-order finite-difference formula for the mixed partial is: 

\begin{equation}
H_{ij}({\bf w}) \;\approx\; \frac{\phi({\bf w} + h\,{\bf e}_i + h\,{\bf e}_j)\;-\;\phi({\bf w} + h\,{\bf e}_i)\;-\;\phi({\bf w} + h\,{\bf e}_j)\;+\;\phi({\bf w})}{h^2}\,, \nonumber
\end{equation}
where ${\bf e}_k \in \R^{d}$ is the $k^{\text{th}}$ unit vector, $k \in \{1,\cdots, d\}$ and $h$ is a small step size.

The finite difference approach is simple to implement and can be applied to \emph{any} function as a black box, since it only requires evaluating $\phi({\bf w})$ at various points. No analytical expression or internal knowledge of $\phi$ is needed, which makes this method widely applicable even when $\phi$ is defined by complicated code or is not analytically differentiable. Because of this generality, finite differences have historically been a common choice for numerical differentiation. However, finite difference Hessians can be \textit{numerically unstable} and sensitive to the choice of step size $h$, we demonstrated this fact in our numerical experiments. If $h$ is too large, the approximation incurs significant \textit{truncation error} (missing higher-order terms of the Taylor series); if $h$ is too small, subtractive cancellation and floating-point round-off errors can dominate, causing an unstable result.

Despite these issues, finite differences remain a useful baseline. They can be refined with techniques such as the \emph{complex-step method}.

\subsection{Automatic Differentiation}
\label{sec:auto_diff}

An alternative to numerical approximation is to use \textbf{automatic differentiation (AD)} to compute Hessians exactly (up to machine precision). Automatic differentiation systematically applies the chain rule to the function's computational graph, yielding derivative values without symbolic manipulation. PyTorch \citep{paszke2019pytorch} provides a high-level API that automates this process, returning a matrix of all second derivatives of a given scalar function by using the Python function \texttt{\color{blue}torch.autograd.functional.hessian}.

Automatic differentiation yields \emph{exact} derivatives for the given floating-point function -- there is no truncation error from finite differencing. This means the Hessian computed by autograd is as accurate as the function's numerical precision allows. There is no need to hand-derive formulas or choose step sizes; the process is mechanical and robust. In terms of efficiency, reverse-mode AD is dramatically faster than finite differencing for gradient computation when $d$ is large (since one reverse pass obtains all $d$ partials, instead of $d$ function evaluations). For the Hessian, AD still has an advantage that it can reuse intermediate results and apply vectorized linear algebra, often outperforming naive $O(d^2)$ finite difference loops, especially with high-performance autodiff libraries.

The main drawback of autograd-based Hessians is \textit{computational and memory cost}. Constructing the Hessian still requires significantly more work than a single gradient: in general, computing an $d\times d$ Hessian via reverse-mode AD involves $d$ gradient computations (or the memory to record a computational graph of size proportional to $n$ outputs if using a single forward-mode sweep). The intermediate computational graph for $\nabla\phi$ must be retained to differentiate it again, which can consume a lot of memory for complex functions. Thus, computing a full Hessian for very large $d$ (such as millions of parameters in a deep network) is often infeasible. In practice, one often resorts to Hessian-vector products (which can be obtained by a single backward pass through the gradient, avoiding materializing the whole Hessian) rather than forming $H$ explicitly. Another limitation is that AD requires the function to be implemented in a differentiable manner. Non-differentiable operations or custom code may need special handling (e.g. piecewise-defined functions can be problematic). Frameworks like PyTorch restrict Hessian computation to scalar-output functions.

\subsection{Symbolic Differentiation (SymPy)}

Instead of numerical methods, one can use \textbf{symbolic algebra systems} to derive the Hessian analytically. For a given analytic expression of $\phi({\bf w})$, computer algebra systems (CAS) like SymPy \citep{10.7717/peerj-cs.103} can perform exact differentiation to obtain formulae for each second partial derivative. This yields an exact Hessian matrix (in symbolic form), which can then be evaluated for specific ${\bf w}$ or analyzed for structure. The advantage is that the result is \emph{exact} (no approximation error) and can sometimes be simplified or factored to reveal insights (e.g. showing symmetry or sparsity patterns). Symbolic Hessians are useful in verifying analytical results or generating high-precision reference values. However, the disadvantages are significant for LLMs. Symbolic differentiation tends to suffer from \emph{expression swell} -- intermediate expressions grow exponentially in size, making it impractical for functions of many variables or very complicated forms. In the worst case, the symbolic Hessian expression might be enormous, leading to slow computation and high memory usage. Indeed, direct symbolic differentiation is known to be much slower than automatic differentiation for complex tasks. That is why we decided not to consider this approach in the current report.

\subsection{Estimating the Hessian Diagonal Using Hessian-Vector Products and Randomized Probing}
\label{sec:diag_hes_est}

Computing the full Hessian matrix \( H = \nabla^2 f(x) \) explicitly is often impractical for large-scale models, such as Large Language Models (LLMs), due to memory and computational constraints. However, in many applications, the primary interest lies in the \textit{diagonal} of the Hessian, which provides crucial information about the local curvature of the function \( f(x) \) along individual parameters. A computationally efficient approach to estimating the Hessian diagonal is based on Hessian-vector products (HVPs) combined with randomized probing techniques, such as \textbf{Hutchinson's trick} \citep{hutchinson1989stochastic, bekas2007estimator}.

Hutchinson's trick is a stochastic approach to approximating the trace and diagonal elements of a matrix without explicitly constructing it. The key idea is that for any symmetric matrix \( H \), we can estimate its diagonal entries using randomly sampled probe vectors. Specifically, for a zero-mean random vector \( v \) with independent components, we have:

\begin{equation}
\mathbb{E}[v \odot (H v)] = H_{diag}, \nonumber
\end{equation}

where \( \odot \) denotes element-wise multiplication, and the expectation is taken over the random vector \( v \). This follows from the identity:

\begin{equation}
\mathbb{E}[v_i v_j] =
\begin{cases}
1, & \text{if } i = j, \\
0, & \text{if } i \neq j.
\end{cases} \nonumber
\end{equation}

Thus, an unbiased estimate of the Hessian diagonal can be obtained by computing the Hessian-vector product \( Hv \) and averaging over multiple random vectors.

Instead of explicitly forming \( H \), we compute Hessian-vector products (HVPs) using automatic differentiation techniques (Section~\ref{sec:auto_diff}). Given a function \( f(x) \), the Hessian-vector product can be computed efficiently via second-order automatic differentiation:

\begin{equation}
H v = \nabla (\nabla f(x)^\top v). \nonumber
\end{equation}

This expression indicates that an HVP can be computed by first obtaining the gradient \( \nabla f(x) \) and then taking the directional derivative of the gradient along \( v \). In modern deep learning frameworks such as PyTorch, this can be efficiently implemented using \texttt{\color{blue}torch.autograd.functional.vhp}.

Using Hutchinson's trick and HVPs, we can estimate the Hessian diagonal as follows:

\begin{enumerate}
    \item Sample \( m \) random probe vectors \( v^{(1)}, v^{(2)}, \dots, v^{(K)} \) from a suitable distribution (typically Rademacher or Gaussian)
    \item Compute Hessian-vector products \( H v^{(k)} \) using automatic differentiation.
    \item Compute the element-wise product \( v^{(k)} \odot (H v^{(k)}) \).
    \item Estimate the diagonal by averaging over samples: 
        \begin{equation}
            H_{diag} \approx \frac{1}{K} \sum_{k=1}^{K} v^{(k)} \odot (H v^{(k)}). \nonumber
        \end{equation}
\end{enumerate}

The choice of distribution for \( v \) affects the variance of the estimator. The Rademacher distribution (random vectors with \( v_i \in \{-1,1\} \) with equal probability) is often preferred due to its computational efficiency and lower variance compared to Gaussian sampling \citep{bekas2007estimator}.

This method avoids storing the full Hessian matrix, requiring only vector-sized storage. Each iteration involves only a Hessian-vector product, which is computationally feasible even for large-scale models. The method provides an unbiased estimate of the diagonal, and accuracy improves with more probe vectors.

The estimate depends on the number of random samples; higher accuracy requires a larger, increasing computational cost. This approach estimates only the diagonal, not off-diagonal entries, limiting its usefulness for applications requiring full Hessian information.

\section{Prerequisites and definition of PPL function}

Large language models (LLMs) \citep{zhang2022opt, touvron2023llama, dubey2024llama} are composed of multiple sequential transformation blocks each of which contains several linear layers. To analyze the Hessian we have made various experiments with a OPT-125M model \citep{zhang2022opt}. The OPT-125M model consists of $B=12$ subsequent blocks, each containing multiple linear layers (matrices). For every block $i\in\{1, \cdots, B\}$ we have the following linear layers:
\begin{itemize}
    \item Self attention:
    \begin{itemize}
        \item q\_proj: $Q_i \in \R^{768 \times 768}$
        \item v\_proj: $V_i \in \R^{768 \times 768}$
        \item k\_proj: $K_i \in \R^{768 \times 768}$
        \item out\_proj: $O_i \in \R^{768 \times 768}$
    \end{itemize}
    \item First fully connected layer: $F^1_i \in \R^{768 \times 3072}$
    \item Second fully connected layer: $F^2_i \in \R^{3072 \times 768}$
\end{itemize}

Let us define $k\in \N$ as the number of linear layers in a single block. For OPT-125M, $k=6$. Since we have $B=12$ blocks with $k=6$ linear layers in each, in total there are $L=Bk=72$ linear layers. We denote $W_l \in \R^{d^l_{in} \times d^l_{out}}$ with $l\in \{1,\cdots,L\}$ as one of the linear layer of the model.

Given a layer index $l$, let ${\cal R}_l: \R^{d_{in}^l \times d_{out}^l}  \to \R^{d_{in}^l \cdot d_{out}^l}$ be the ``reshaping'' operator, reshaping a matrix into a large-dimensional vector. That is, ${\bf w}_l ={\cal R}_l(W_l)$ is the vector obtained from the matrix $W_l$ by concatenating entries of $W_l$ into a single $d^l\eqdef d_{in}^l \times d_{out}^l$ dimensional vector. The entries can be concatenated in any order as long as it is always fixed.  Note that $\|W_l\|_F = \|{\cal R}_l(W_l)\|_2 = \|{\bf w}_l\|_2$. Further, let ${\cal R}_l^{-1}$ be the inverse reshaping operator mapping ${\bf w}_l$ back to $W_l$, such that
$ {\cal R}_l^{-1}({\bf w}_l) = {\cal R}_l^{-1}( {\cal R}_l(W_l ))  = W_l$.  Let ${\bf w} \eqdef ({\bf w}_1,\dots,{\bf w}_L ) \in \R^d$, where $d \eqdef \sum_{l=1}^L d^l$, and ${\cal R}^{-1}({\bf w}) \eqdef ({\cal R}_1^{-1}({\bf w}_1), \dots, {\cal R}_L^{-1}({\bf w}_L)) $.

Let $\phi:\R^{d} \to \R$ be the perplexity function on $\R^d$ defined formally as
\begin{equation}
    \label{eq:phi_main_def}
    \phi({\bf w}) \eqdef PPL({\cal R}^{-1}({\bf w})),
\end{equation}
where $PPL$ is the perplexity function operating in the space of $W$.

Computing the full Hessian for even a single matrix from the self attention of the first layer is infeasible due to its size -- $768\times768 = 589,824$ parameters, leading to a Hessian with around $400$ Billion entries. Given these constraints, we focused on a smaller scope of $t\in \N$ parameters of the module from every layer.

\section{Notation for different subsets of parameters}

In order to consider only a subset of parameters of the Hessian matrix, we need to re-parametrize the perplexity function $\phi({\bf w})$ from (\ref{eq:phi_main_def}) to make it a function of only a subset of parameters from ${\bf w}$. In this chapter we formally define such re-parametrizations for different subsets of parameters.

\subsection{Consider a single linear layer from one single block}

Let us consider some layer $W_l \in \R^{d^l_{in} \times d^l_{out}}$. Let us consider the first $t \in \{1, \cdots, d^l_{in} \cdot d^l_{out}\}$ entries of the vector ${\bf {w}}_l$ -- the vector ${\bf {w}}^t_l = [{\bf {w}}_l]_{:t}$, ${\bf {w}}^t_l \in \R^{t}$. For any vector ${\bf x} \in \R^d$ we use the notation $[{\bf x}]_{n_1:n_2} \in \R^{n_2-n_1}$ with $1 \leq n_1 \leq n_2 \leq d$ to represent the vector with elements from the vector ${\bf x}$ with indices $\{n_1, \cdots, n_2\}$. In our notation $[{\bf {w}}_l]_{:t}$ is equivalent to $[{\bf {w}}_l]_{1:t}$. We will use $(, )$ to express the concatenation of several vectors into one single flat vector. For example, when we write $({\bf {w}}^t_l, [{\bf {w}}_l]_{t:})$, we mean that the resulting vector will be the concatenation of the first vector with the second one:
\begin{equation}
    ({\bf {w}}^t_l, [{\bf {w}}_l]_{t:}) \in \R^{d^l_{in} \cdot d^l_{out}}, \nonumber
\end{equation}
where $[{\bf {w}}_l]_{t:} \in \R^{d^l_{in} \cdot d^l_{out} - t}$ -- is the vector of all elements from ${\bf {w}}_l$ except the first $t$ elements.
Finally, we define the perplexity function $\phi_{W_l}: \R^{t} \to \R$ as a function of ${\bf {w}}^t_l$:
\begin{equation*}
    \phi_{W_l}({\bf {w}}^t_l) \eqdef \phi\left( ({\bf {w}}_1, \cdots, ({\bf {w}}^t_l, [{\bf {w}}_l]_{t:}), \cdots, {\bf {w}}_l ) \right).
\end{equation*}

\subsection{All layers from a single block}
If we aim to consider all $k=6$ linear layers from some block $i\in\{1, \cdots, B\}$, then we define ${\bf b}_i \eqdef ({\bf {w}}^t_{ik+1}, \cdots, {\bf {w}}^t_{ik+k})$, where ${\bf {w}}^t_{ik+s} \in \R^t$ is the subset of the first $t$ parameters of the $s^{\text{th}}$ sub-module from the $i^{\text{th}}$ block: ${\bf {w}}^t_{ik+s} = [{\bf w}_{ik+s}]_{:t}$. In this case, the perplexity function $\phi_{b_i}: \R^{kt} \to \R$ can be represented the the following way:
\begin{equation*}
    \phi_{b_i}({\bf b}_i) \eqdef \phi\left( ({\bf {w}}_{1}, \cdots, ([{\bf b}_i]_{:t}, [{\bf {w}}_{ik+1}]_{t:}), \cdots ([{\bf b}_i]_{(k-1)t:kt}, [{\bf {w}}_{ik+k}]_{t:}), \cdots, {\bf {w}}_{L} ) \right). \nonumber
\end{equation*}

\subsection{Consider a single linear layer from all blocks}

Here we aim to consider a single linear layer (for example, the q\_proj from a self attention) for all blocks $i\in\{1,\cdots, B\}$. In this report we consider the q\_proj linear layer from all blocks, but similar experiments might be done for any other modules. We define a subset of relevant parameters in the following way:
\begin{equation}
    {\bf q} \eqdef ([{\cal R}(Q_1)]_{:t}, [{\cal R}(Q_2)]_{:t}, \cdots, [{\cal R}(Q_B)]_{:t}), \nonumber
\end{equation}
In this case, the perplexity function $\phi_Q: \R^{Bt} \to \R$ can be represented the the following way:
\begin{equation*}
    \phi_Q({\bf q}) \eqdef \phi\left( ( (([{\bf q}]_{:t}, [{\cal R}(Q_1)]_{t:}), \cdots ), \cdots, (([{\bf q}]_{(B-1)t:Bt}, [{\cal R}(Q_B)]_{t:}), \cdots )) \right).
\end{equation*}

\subsection{Consider all linear layers from LLM}

Finally, if we aim to consider all layers and all sub-modules, then we define ${\bf {w}}^t \eqdef ({\bf {w}}^t_1, \cdots, {\bf {w}}^t_L)$. The perplexity function $\phi_L : \R^{tL} \to \R$ in this case will be
\begin{equation*}
    \phi_L({\bf {w}}^t) \eqdef \phi\left( ( ([{\bf {w}}^t]_{:t}, [{\bf {w}}_1]_{t:}), \cdots, ([{\bf {w}}^t]_{(L-1)t:Lt}, [{\bf {w}}_L]_{t:} ) ) \right).
\end{equation*}

\section{Experiments}

\subsection{Setup}
In all our experiments we use OPT-125M model \citep{zhang2022opt} with model sequence length of $2048$. If otherwise is not specified, we used a batch size $b=140$ that corresponds to the full WikiText-2~\citep{wikitext103} validation set. For all our experiments we used 4 $\times$ Nvidia A100 GPUs \citep{choquette2021nvidia}.

\subsection{Different subsets of parameters}

You can see the results of the Hessian computation for different subsets of parameters on the Figure~\ref{fig:hessian_different_subsets}.

Firstly, let us consider $t=768$ parameters from only the first layer's q\_proj -- the linear layer $Q_1 \in \R^{768\times768}$. On (Fig.~\ref{fig:hessian_q_proj_t_768}) we can see $\nabla^2 \phi_{W_1}({\bf {w}}^t_1) \in \R^{768 \times 768}$. In this particular case ${\bf {w}}^t_1 \in \R^{768}$ is the first row of the matrix $Q_1 \in \R^{768\times768}$.

On the next step, we considered all linear layers of the first block. We selected $t=300$. We can see $\nabla^2 \phi_{b_1}({\bf {b}}_1) \in \R^{1800 \times 1800}$ (Fig.~\ref{fig:hessian_all_layers_first_block_t_300}).

Then we expanded the Hessian computation to include parameters from multiple layers -- from each layer, we selected $t=150$ parameters of the matrix $Q_i \in \R^{768\times768}$ and repeated this for $i\in \{1,\cdots, B\}$, yielding $\nabla^2 {\phi}_Q({\bf {q}}) \in \R^{1800 \times 1800}$ (Fig.~\ref{fig:hessian_q_proj_all_blocks_t_150}).

Finally, we expanded the Hessian computation to include parameters from all layers and all sub-modules -- from each layer $W_l$, we selected $t=25$ parameters. We can see $\nabla^2 {\phi}_L({\bf w}^t) \in \R^{1800 \times 1800}$ on the (Fig.~\ref{fig:hessian_all_layers_all_blocks_t_25}).

\begin{figure}[ht]
    \centering
    \begin{subfigure}{0.24\linewidth}
        \includegraphics[width=0.99\linewidth]{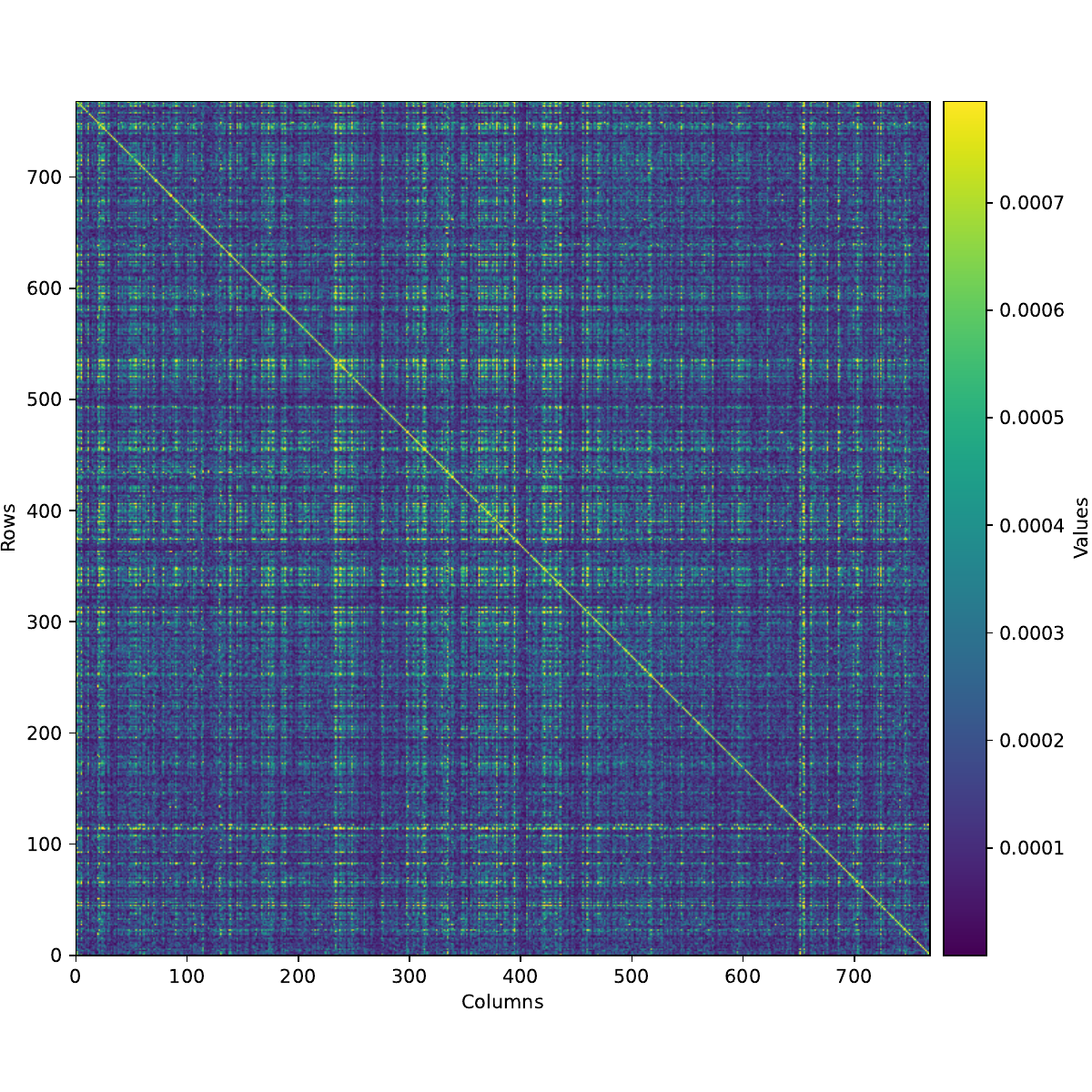}
        \captionsetup{width=.9\linewidth}
        \captionsetup{aboveskip=0pt, belowskip=12pt}
        \caption{$|\nabla^2 \phi_{W_1}({\bf {w}}^t_1)|$, $t=768$, total number of variables = $768$.}
        \label{fig:hessian_q_proj_t_768}
    \end{subfigure}
    \begin{subfigure}{0.24\linewidth}
        \includegraphics[width=0.99\linewidth]{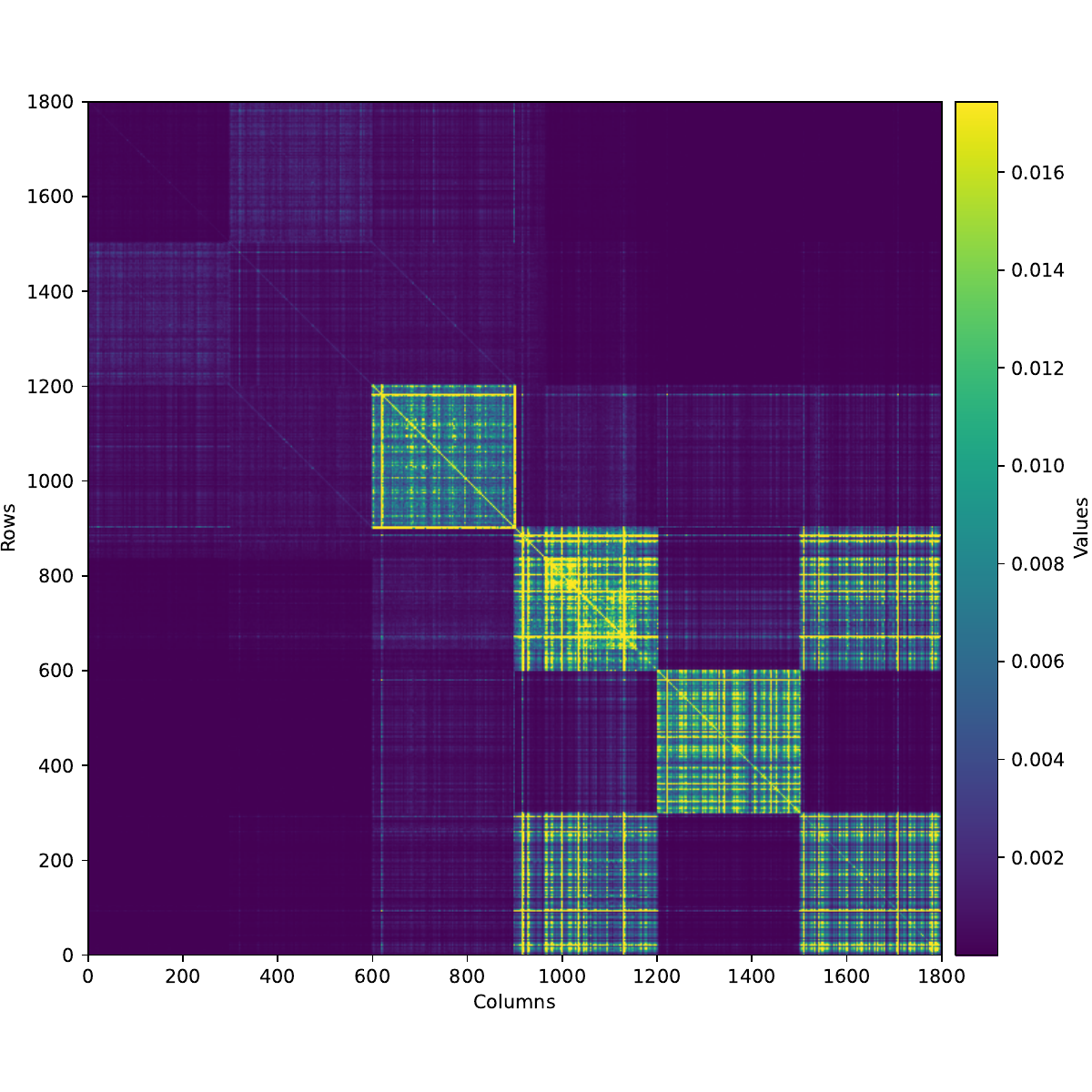}
        \captionsetup{width=.9\linewidth}
        \captionsetup{aboveskip=0pt, belowskip=12pt}
        \caption{$|\nabla^2 \phi_{b_1}({\bf {b}}_1)|$, $t=300$, total number of variables = $1800$.}
        \label{fig:hessian_all_layers_first_block_t_300}
    \end{subfigure}
    \centering
    \begin{subfigure}{0.24\linewidth}
        \includegraphics[width=0.99\linewidth]{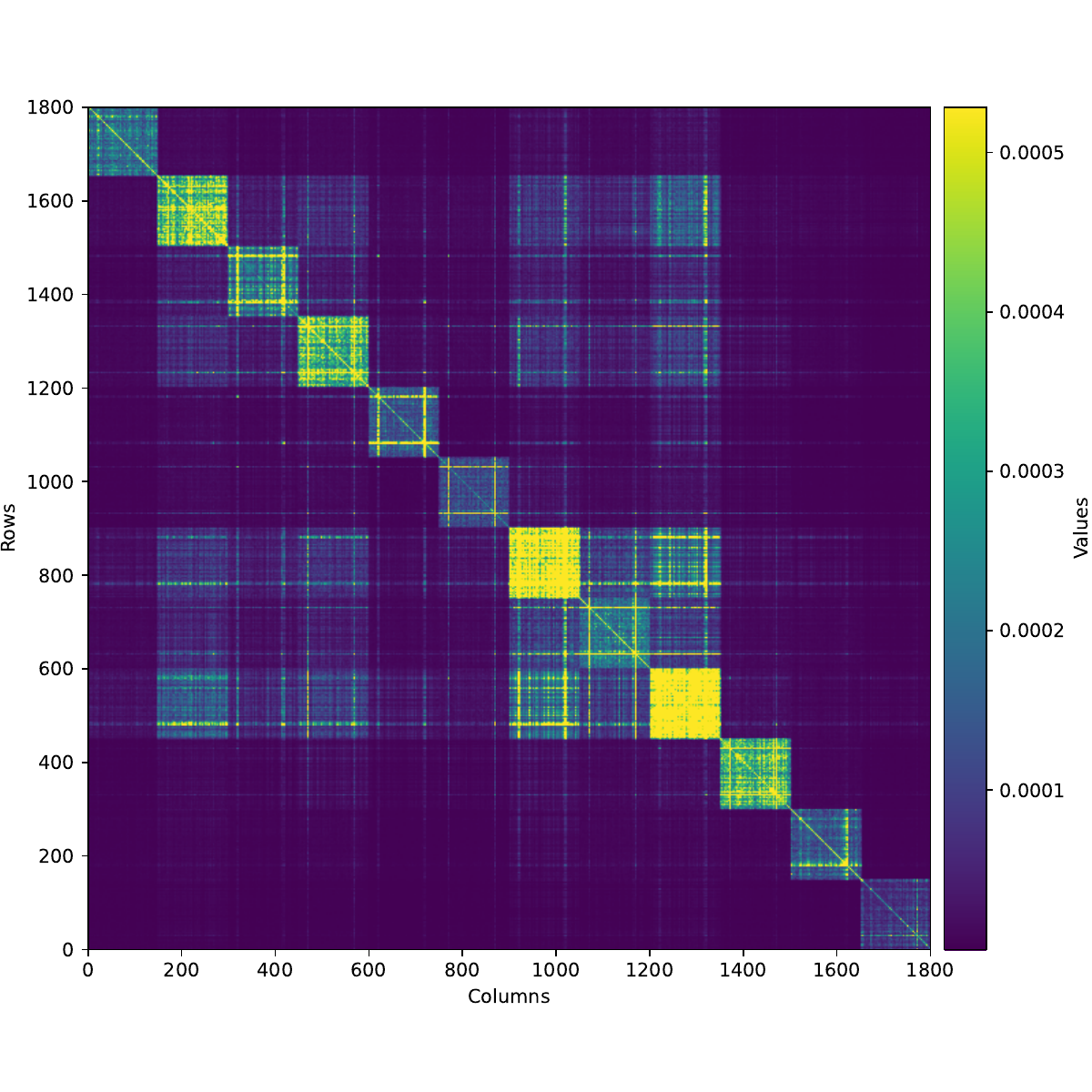}
        \captionsetup{width=.9\linewidth}
        \captionsetup{aboveskip=0pt, belowskip=12pt}
        \caption{$|\nabla^2 {\phi}_Q({\bf {q}})|$, $t=150$, total number of variables = $1800$.}
        \label{fig:hessian_q_proj_all_blocks_t_150}
    \end{subfigure}
    \begin{subfigure}{0.24\linewidth}
        \includegraphics[width=0.99\linewidth]{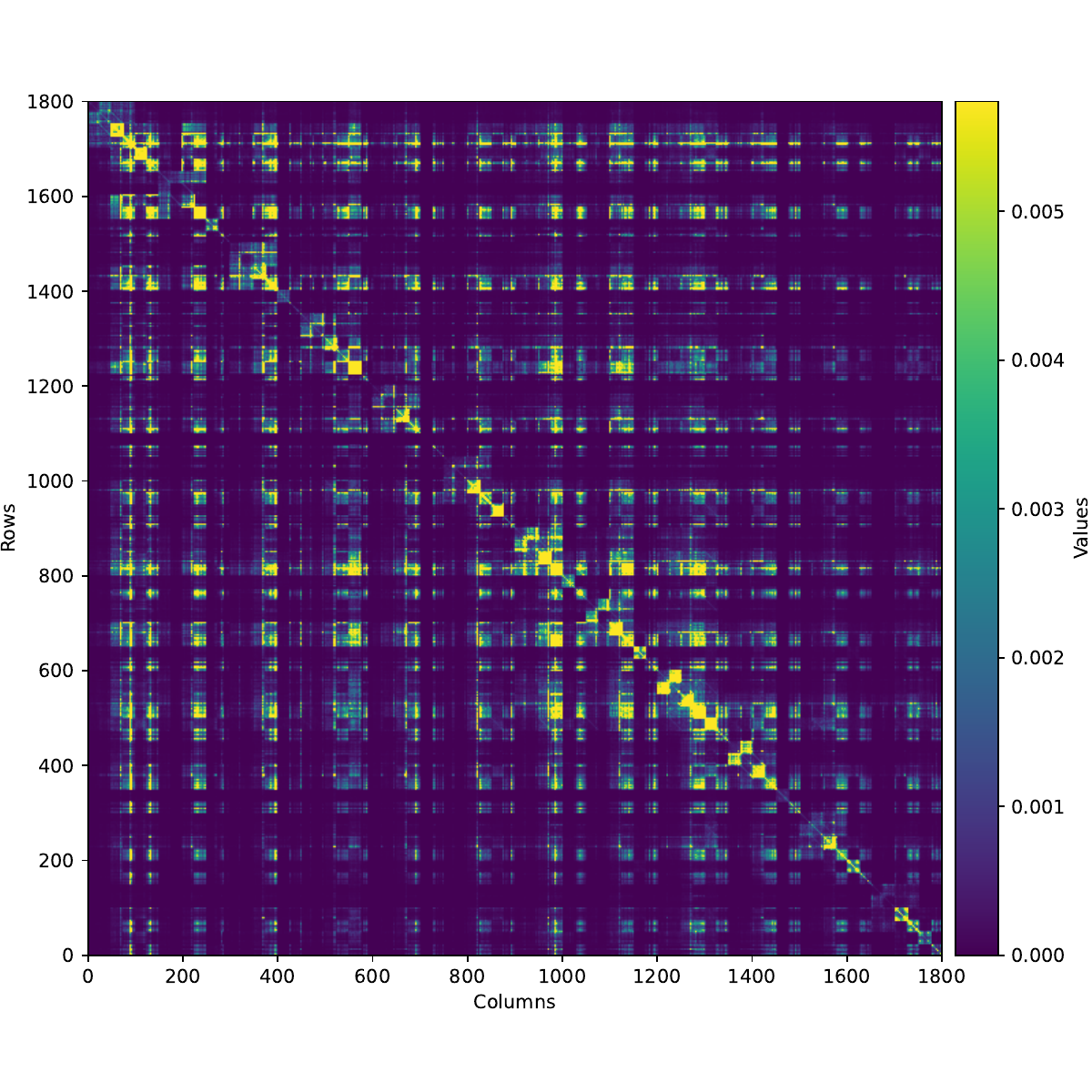}
        \captionsetup{width=.9\linewidth}
        \captionsetup{aboveskip=0pt, belowskip=12pt}
        \caption{$|\nabla^2 {\phi}_L({\bf w}^t)|$, $t=25$, total number of variables = $1800$.}
        \label{fig:hessian_all_layers_all_blocks_t_25}
    \end{subfigure}
    \caption{Visual representation of different parts of $\nabla^2 \phi({\bf w})$ for different subsets of parameters. For clarity, we have plotted the absolute values of the entries in all cases to better visualize the magnitude of the elements.}
    \label{fig:hessian_different_subsets}
\end{figure}

\subsection{Different batch size $b$}

To accurately compute the Hessian matrix, we must evaluate the Hessian over a batch of input tokens and then average the results across different samples. We conducted experiments using various batch sizes $b \in {1, \ldots, 140}$. To assess the accuracy of the Hessian matrix computed with different batch sizes, we plotted the relative $\ell_2$ loss and the relative $\ell_2$ difference between consecutive batch sizes as follows:
\begin{gather} 
\text{Relative $\ell_2$ loss}(b) = \frac{\|H^b - H^{140}\|_2}{\|H^{140}\|_2}, \nonumber \\
\text{Relative $\ell_2$ difference}(b) = \frac{\|H^{b+1} - H^{b}\|_2}{\|H^{b+1}\|_2}, \nonumber 
\end{gather} where $H^b \in \R^{25 \times 25}$ denotes the Hessian matrix computed with batch size $b$, and $H^{140} \in \R^{25 \times 25}$ is the reference Hessian computed using the full test batch.
From Figure~\ref{fig:hessian_different_bs}, we observe that the Hessian can be accurately estimated with relatively small batch sizes, around $b \sim 60$.

\begin{figure}[ht]
    \centering
    \begin{subfigure}{0.4\linewidth}
        \includegraphics[width=1.05\linewidth]{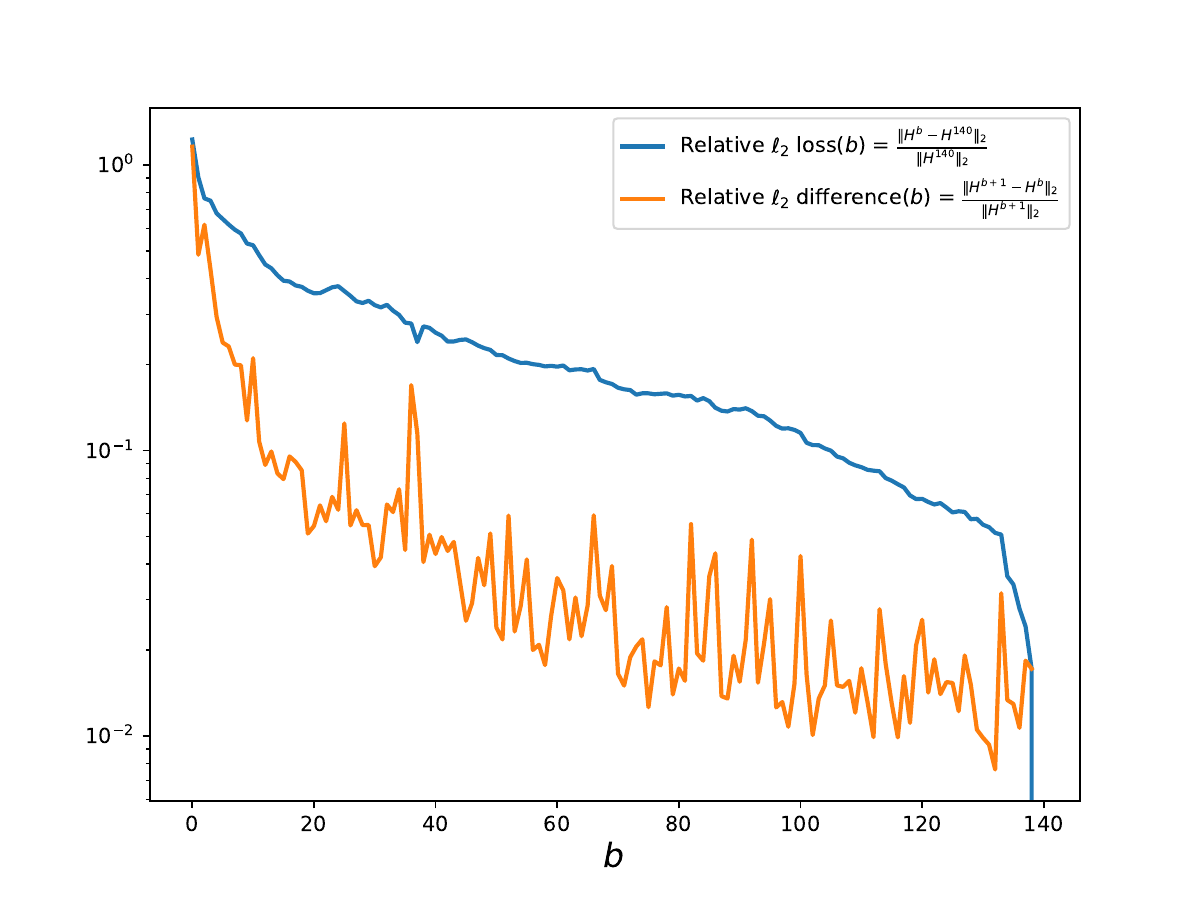}
        \captionsetup{width=.9\linewidth}
        \captionsetup{aboveskip=0pt, belowskip=12pt}
        \caption{Relative $\ell_2$ loss (\ref{eq:l2_relative_loss_hes_diag}) and $\ell_2$ difference (\ref{eq:l2_relative_diff_hes_diag}) for different $b$.}
        \label{fig:losses_vs_bs}
    \end{subfigure}
    \begin{subfigure}{0.59\linewidth}
        \includegraphics[width=1.05\linewidth]{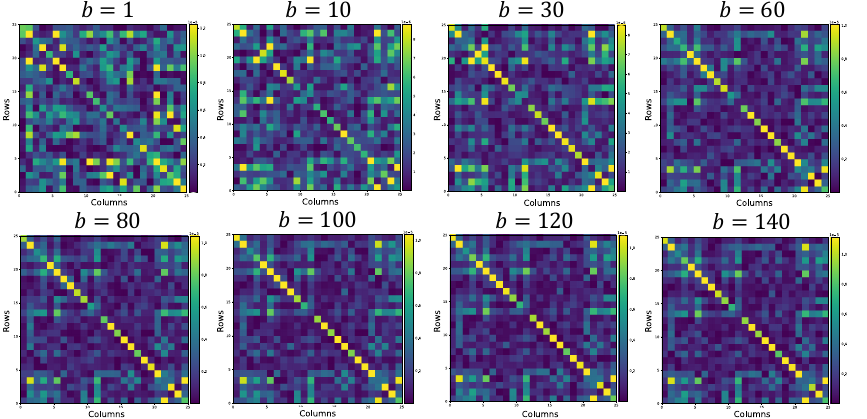}
        \captionsetup{width=.9\linewidth}
        \captionsetup{aboveskip=0pt, belowskip=12pt}
        \caption{$|H^b| = |\nabla^2 \phi_{W_1}({\bf {w}}^t_1)| \in \R^{25 \times 25}$ with different batch size $b$. }
      \label{fig:hessian_different_bs_plots}
    \end{subfigure}
    \caption{Experiments with estimation of the Hessian diagonal elements for q\_proj linear layer from the first block of OPT-125M.}
    \label{fig:hessian_different_bs}
\end{figure}

\subsection{Hessian diagonal estimation}
In this section we aim to estimate the diagonal of the Hessian matrix for all parameters of the linear layer q\_proj from the first block of OPT-125M, $Q_1 \in \R^{768 \times 768}$. The number of diagonal elements from the Hessian in this case will be $768 \cdot 768$. We will denote the diagonal of the Hessian by the tensor $H_{diag} \in \R^{768 \times 768}$ of the same shape as the weight matrix $Q_1 \in \R^{768 \times 768}$.

In order to estimate the tensor $H_{diag}$, we used the Hutchinson's trick (Section~\ref{sec:diag_hes_est}). This approach provides a sequence of tensors \( \widetilde{H}_{diag}^{(1)}, \widetilde{H}_{diag}^{(2)}, \dots, \widetilde{H}_{diag}^{(K)} \), where 
\begin{equation}
    \widetilde{H}^{(k)}_{diag} = \frac{1}{k} \sum_{i=1}^{k} v^{(i)} \odot (H v^{(i)}), \nonumber
\end{equation}
$\widetilde{H}^{(k)}_{diag} \in \R^{768 \times 768}$ for $k=\{1, \cdots, K\}$. 

More iterations $K$ of sampling of vector-Hessian product provide a better approximation of the real Hessian diagonal $H_{diag}$. In order to estimate, how many steps are needed, we computed the true diagonal of the Hessian for the first row of the linear layer ($768$ parameters) by \texttt{\color{blue}torch.autograd.functional.hessian} (Section~\ref{sec:auto_diff}) and computed the partial relative $\ell_2$ loss with the estimation of the same elements from $\widetilde{H}^k_{diag}$:
\begin{equation}
    \label{eq:l2_relative_loss_hes_diag}
    \text{Partial relative $\ell_2$ loss}(k) = \frac{\|[\widetilde{H}^k_{diag} - H_{diag}]_{1}\|_2}{\|[H_{diag}]_{1}\|_2},
\end{equation}
where $[H_{diag}]_{1} \in \R^{1 \times 768}$ is the first row of $H_{diag}$, $[\widetilde{H}^k_{diag} - H_{diag}]_{1} \in \R^{1 \times 768}$ is the first row of the difference between the real diagonal Hessian $H_{diag}$ and its approximation $\widetilde{H}^k_{diag}$.

In addition to the relative loss (\ref{eq:l2_relative_loss_hes_diag}), we also evaluated the relative difference between each two subsequent approximations of the Hessian diagonal:
\begin{equation}
    \label{eq:l2_relative_diff_hes_diag}
    \text{Relative $\ell_2$ difference}(k) = \frac{\|\widetilde{H}^{k+1}_{diag} - \widetilde{H}^{k}_{diag}\|_2}{\|\widetilde{H}^{k+1}_{diag}\|_2}.
\end{equation}

You can see the results of the algorithm for different values of $k$ in Figure~\ref{fig:diagonal_hessian_experiments}. A larger $k$ yields a less noisy and smoother approximation of $H_{\text{diag}}$. To achieve a relative $\ell_2$ loss of $0.25$ (\ref{eq:l2_relative_loss_hes_diag}), we required $3000$ iterations, which took approximately 25 hours on 4 $\times$ NVIDIA A100 GPUs. However, we observed that even a single evaluation of the vector-Hessian product ($k=1$) captures the main patterns of $H_{\text{diag}}$: as shown in Figure~\ref{fig:q_proj_hessian_diag_different_k}, the structure of the Hessian matrix at $k=1$ closely resembles that at $k=5000$, albeit with some additional noise.

\begin{figure}[ht]
    \centering
    \begin{subfigure}{0.48\linewidth}
        \includegraphics[width=1.05\linewidth]{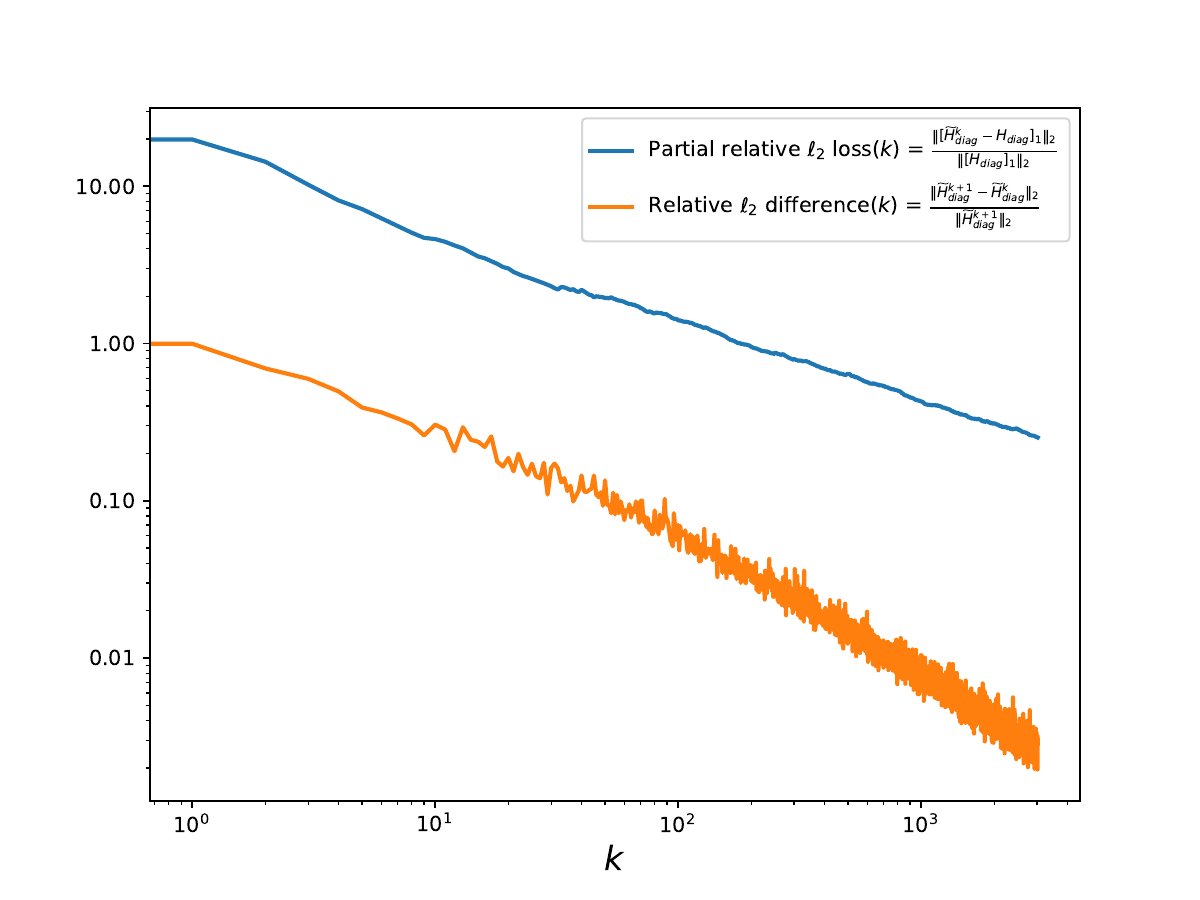}
        \captionsetup{width=.9\linewidth}
        \captionsetup{aboveskip=0pt, belowskip=12pt}
        \caption{Relative $\ell_2$ loss (\ref{eq:l2_relative_loss_hes_diag}) and $\ell_2$ difference (\ref{eq:l2_relative_diff_hes_diag}) for different $k$ (number of VHP samples), $b=60$.}
        \label{fig:losses_vs_k}
    \end{subfigure}
    \begin{subfigure}{0.51\linewidth}
        \includegraphics[width=1.05\linewidth]{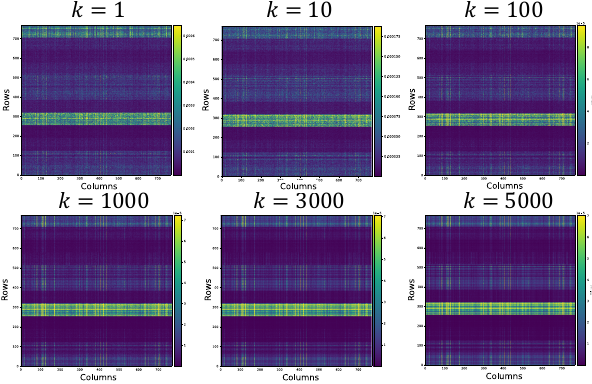}
        \captionsetup{width=.9\linewidth}
        \captionsetup{aboveskip=0pt, belowskip=12pt}
        \caption{$|\widetilde{H}^{(k)}_{diag}| \in \R^{768 \times 768}$ with different $k$ (number of VHP samples), batch size $b=60$. }        \label{fig:q_proj_hessian_diag_different_k}
    \end{subfigure}
    \caption{Experiments with estimation of the Hessian diagonal elements for q\_proj linear layer from the first block of OPT-125M.}
    \label{fig:diagonal_hessian_experiments}
\end{figure}

\section{Implementation details}

To be able to compute the Hessian for only a subset of the network's parameters, we manually defined a perplexity function as a function of the specific subset of parameters. We used this with the PyTorch \citep{paszke2019pytorch} autograd routines to compute the Hessian quickly and accurately.

Note that Hessian computation induces significant memory consumption when using larger batch sizes. A larger batch size is crucial for accurate perplexity computation, and therefore for accurate Hessian computation (for instance, a batch size of $140$ yields a WikiText-2 perplexity for OPT-125M of $27.65$, while a batch size of $4$ results in a WikiText-2 perplexity of $30.06$). To mitigate the memory overflow problem, we modified the perplexity function to exhibit an additive property (Sec.~\ref{sec:hessian_for_large_bs}). This means that we can compute the Hessian for the full batch by averaging Hessians, computed over smaller batches. With this adjustment, we were able to use PyTorch's autograd routine to compute the Hessian on a full batch size without encountering memory overflow issues.

\section{How to compute a Hessian for large batch sizes}
\label{sec:hessian_for_large_bs}

The perplexity function is computed by the following sequence of events:

\begin{enumerate}
    \item We have the input for LLM in a form of the matrix $X \in \R^{b \times s}$, where $b$ is a batch size and $s$ is an output sequence length.
    \item Having the input $X$, we compute the output of the model by the function $f:\R^{b\times s} \to \R^{b\times s\times n}$ that represents the work of LLM, where $n$ is the size of the embedding space. For OPT-125M, $s=2048, n=50272$. Elements of $f(X)$ are called logits and represent the probability for each word to be the next token in the output sequence.
    \item From the output of the function $f$, we compute the Cross Entropy Loss of the output by the function $g:\R^{b\times s\times n} \to \R^{b}$, $$g(f(X)) \eqdef \text{CrossEntropyLoss}(f(X)).$$
    \item After that we compute the average of the Cross Entropy Loss for all elements from a batch: $$\Bar{c} \eqdef \frac{1}{b}\sum_{i=1}^{b}{c_i}.$$
    \item Finally, we compute the Perplexity of the model via: $$P(\Bar{c}) \eqdef e^{\Bar{c}}.$$
\end{enumerate}

Let us change steps (3) and (4): on the step (3) we will not divide the sum by $b$, so we define $$\Bar{c}' = \sum_{i=1}^{b}{c_i},$$ on the step (4) we will not use the exponential function -- instead we will use the identity function: 
\begin{equation}
    \label{eq:new_def_of_ppl}
    P'(\Bar{c}') \eqdef \Bar{c}'.
\end{equation}

\begin{theorem}
\label{th:th_additive_prop}
$PPL'(\Bar{c}')$ defined in (\ref{eq:new_def_of_ppl}) has the additive property. In other words, the perplexity computed for the full batch $b$ will be equal to the sum of perplexities, computed for $b$ subsequent samples.
\end{theorem}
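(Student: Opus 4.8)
The plan is to exploit the one structural fact that makes the redefinition in (\ref{eq:new_def_of_ppl}) work: in the pipeline of steps (1)--(5), the \emph{only} operation that mixes the $b$ rows of $X$ with one another is the averaging $\tfrac1b\sum_i$ in the original step (4); every other operation acts row by row. Once that averaging and the exponential of step (5) are removed, $PPL'(\bar c')=\bar c'=\sum_{i=1}^b c_i$ is literally a sum of per-sample contributions, so additivity is immediate. I would therefore structure the proof as: (i) establish that $c_i$ depends on $X$ only through its $i$-th row; (ii) regroup the finite sum over any partition of the batch.

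For (i), first record that the LLM forward map $f:\R^{b\times s}\to\R^{b\times s\times n}$ is \emph{row-wise}: there is a single-sequence map $f_0:\R^{s}\to\R^{s\times n}$ with $[f(X)]_{i,:,:}=f_0(X_{i,:})$ for every $i$. This is the only modeling input that has to be invoked, and it holds for OPT-125M (and transformer decoders in general) because none of the constituent operations -- token and positional embeddings, layer normalization, causally-masked self-attention, the MLP blocks -- transports information across distinct rows of the batch. Next, the cross-entropy step $g$ is likewise row-wise, $[g(Y)]_i=g_0(Y_{i,:,:})$ for a per-sequence loss $g_0$ (whose targets for sequence $i$ are themselves read off from $X_{i,:}$). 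Composing, $c_i=g_0\big(f_0(X_{i,:})\big)$ is a function of the single row $X_{i,:}$ alone.

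For (ii), split $X$ row-wise into sub-batches $X^{(1)},\dots,X^{(m)}$ with index sets $S_1,\dots,S_m$ partitioning $\{1,\dots,b\}$; taking $m=b$ and $|S_j|=1$ recovers the ``$b$ subsequent samples'' phrasing of the statement. By step (i), the per-sample losses $\{c_i\}_{i\in S_j}$ obtained by running the model on $X^{(j)}$ coincide with the corresponding entries of $\{c_i\}_{i=1}^b$ obtained on the full batch, so
\begin{equation}
PPL'(X)\;=\;\bar c'(X)\;=\;\sum_{i=1}^{b} c_i\;=\;\sum_{j=1}^{m}\sum_{i\in S_j} c_i\;=\;\sum_{j=1}^{m}\bar c'\big(X^{(j)}\big)\;=\;\sum_{j=1}^{m} PPL'\big(X^{(j)}\big), \nonumber
\end{equation}
which is the claimed additive property. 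Since this is an identity of functions of the weights ${\bf w}$ as well (each $c_i$ depends on ${\bf w}$ and on $X_{i,:}$), differentiating twice in ${\bf w}$ gives $\nabla^2 PPL'$ on the full batch as the sum of the per-sub-batch Hessians -- the computational payoff used in the rest of the report.

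The ``hard'' part here is conceptual, not computational: one must be careful in asserting that $c_i$ is invariant under which batch it is computed in, i.e. the row-wise factorization of $f$ and $g$; everything after that is a one-line regrouping of a finite sum. It is instructive to contrast with the original $PPL(\bar c)=e^{\bar c}=\exp\!\big(\tfrac1b\sum_i c_i\big)$, which violates additivity on two counts -- the normalization $1/b$ and the nonlinearity of $\exp$, since $e^{x+y}\neq e^x+e^y$ -- and this is precisely the failure that (\ref{eq:new_def_of_ppl}) is designed to repair by making $PPL'$ a \emph{linear} function of the vector $(c_1,\dots,c_b)$ of per-sample losses.
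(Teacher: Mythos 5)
Your proposal is correct and follows essentially the same route as the paper: the paper likewise introduces single-sample versions $\hat f,\hat g$ of the forward map and the cross-entropy, observes that the batched computation is the concatenation of per-sample computations, and concludes $P'(X)=\sum_{i=1}^b \hat g(\hat f([X]_i))$. Your write-up is somewhat more explicit about \emph{why} the forward map factors row-wise and generalizes the singleton split to arbitrary sub-batch partitions, but these are refinements of the same argument, not a different one.
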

\begin{proof}
Let us define the functions $\hat{f}:\R^{s} \to \R^{s\times n}$ and $\hat{g}:\R^{s\times n} \to \R$ to be the same as $f:\R^{b\times s} \to \R^{b\times s\times n}$ and $g:\R^{b\times s\times n} \to \R^{b}$, but with fixed $b=1$, so we effectively have a reduction of one dimension.

Since the output is computed for each sample from a batch independently, the full batched output $f(X)$ can be obtained by concatenation of $b$ per-sample outputs from $\hat{f}([X]_{i})$ for $i \in \{1, \cdots, b\}$:
$$
f(X) = \left(\hat{f}([X]_{1}), \cdots, \hat{f}([X]_{b})\right),
$$
where $[X]_{i} \in \R^{1 \times s}$, $i \in \{1, \cdots, b\}$ represents the $i^{\text{th}}$ row-vector of the input matrix $X \in \R^{b \times s}$.
The same is true for a Cross Entropy Loss function $g(f(X))$:
$$
g(f(X)) = \left(g(f(X))_1, \cdots, g(f(X))_b\right) = \left(\hat{g}(\hat{f}([X]_{1})), \cdots, \hat{g}(\hat{f}([X]_{b}))\right),
$$
hence the equation (\ref{eq:sum_inside_proof_additive_prop}) holds:
\begin{equation}
    \label{eq:sum_inside_proof_additive_prop}
    P'(X) = \sum_{i=1}^b{\left[g(f(X))\right]_i} = \sum_{i=1}^b{\hat{g}(\hat{f}([X]_{i}))}
\end{equation}
where $X_{i} \in \R^s$.

In essence, equation (\ref{eq:sum_inside_proof_additive_prop}) means that the perplexity computed for the full batch $b$ will be equal to the sum of perplexities, computed for $b$ subsequent samples.

\end{proof}

\begin{corollary}
    We can compute the Hessian of the perplexity function (\ref{eq:new_def_of_ppl}) over large batch of samples by summing up several Hessians, computed on a single sample:
    \begin{equation}
        \label{eq:sum_of_hessians_col}
        \nabla^2 P'(g(f(X))) = \sum_{i=1}^b{\nabla^2 P'(\hat{g}(\hat{f}([X]_{i})))}.
    \end{equation}
\end{corollary}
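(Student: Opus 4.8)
The plan is to reduce the statement to the elementary fact that the Hessian operator is linear on finite sums of twice-differentiable scalar functions; all the model-specific content has already been done in \Cref{th:th_additive_prop}. Concretely, I would first emphasize that, despite the notation $\nabla^2 P'(g(f(X)))$, the derivatives here are taken with respect to the network parameters ${\bf w}$ (or whichever subset $\mathbf{w}^t_l$, $\mathbf{b}_i$, $\mathbf{q}$, or $\mathbf{w}^t$ we have re-parametrized around), not with respect to $X$. So I would rewrite the object of interest as the map ${\bf w} \mapsto P'(X;{\bf w})$, and then invoke equation (\ref{eq:sum_inside_proof_additive_prop}) to write $P'(X;{\bf w}) = \sum_{i=1}^b \psi_i({\bf w})$, where $\psi_i({\bf w}) \eqdef \hat{g}(\hat{f}([X]_i;{\bf w}))$ is the contribution of the $i$-th sample, each $\psi_i$ being a function of the \emph{same} shared weights ${\bf w}$ through the single-input forward pass on $[X]_i$.

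The second step is simply to differentiate twice. Since the sum over the batch is finite, for every pair of coordinate indices $p,q$ we have $\partial^2_{pq}\big(\sum_{i=1}^b \psi_i\big) = \sum_{i=1}^b \partial^2_{pq}\psi_i$ by linearity of the partial-derivative operator, hence $\nabla^2\big(\sum_{i=1}^b \psi_i\big) = \sum_{i=1}^b \nabla^2\psi_i$ entrywise. Evaluating at the current weights yields exactly (\ref{eq:sum_of_hessians_col}). The only hypothesis needed is that each $\psi_i$ be $C^2$ in a neighbourhood of the evaluation point, which holds because $\hat{f}$ and $\hat{g}$ are compositions of the smooth operations comprising the transformer forward pass and the cross-entropy loss (and where piecewise-smooth activations appear, one restricts to the open set on which they are smooth); I would state this as a standing regularity assumption.

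I do not expect a genuine analytic obstacle: the corollary is essentially the observation that the Hessian of a finite sum is the sum of the Hessians, combined with the decomposition already proved. The only points requiring care are bookkeeping rather than mathematics: making explicit that $\nabla^2$ is taken in ${\bf w}$ and not in $X$; making explicit that the decomposition of \Cref{th:th_additive_prop} is a decomposition of \emph{functions of the weights}, so that the per-sample terms share the argument being differentiated; and flagging the implicit twice-differentiability of the model. I would close by remarking that the identical argument gives additivity of the gradient — and of any fixed-order derivative tensor — so this corollary is one instance of a general linearity principle, which is precisely what legitimizes the ``compute Hessians on sub-batches and sum them'' strategy used in \Cref{sec:hessian_for_large_bs} to avoid memory overflow.
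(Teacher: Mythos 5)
Your proposal is correct and follows exactly the route the paper intends: the corollary is an immediate consequence of the additive decomposition $P'(X) = \sum_{i=1}^b \hat{g}(\hat{f}([X]_i))$ established in \Cref{th:th_additive_prop}, combined with linearity of the second-derivative operator in the shared weights (the paper in fact states the corollary without any explicit proof, leaving precisely this one-line linearity argument implicit). Your added clarifications --- that $\nabla^2$ is taken with respect to the parameters rather than $X$, and that a $C^2$ regularity assumption on the per-sample maps is tacitly needed --- are correct and only make the argument more careful than the paper's.
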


\section{Table of Frequently Used Notation} \label{sec:notation}

\begin{table}[H]\caption{Notation table}
\label{tab:notation_table}
\centering 
\begin{tabular}{r c p{15cm} }
\toprule
    $[{\bf x}]_i$ & -- & $i^{\text{th}}$ element of the vector ${\bf x}$. \\
    $[W]_{ij}$ & -- & one entry of the matrix $W$ on the intersection of the $i^{\text{th}}$ row and $j^{\text{th}}$ column. \\
    $[W]_{i}$ & -- & $i^{\text{th}}$ row-vector from the matrix $W \in \R^{a \times b}$, $[W]_{i} \in \R^{1 \times b}$. \\
    $k$ & -- & Number of linear layers inside a single block of LLM. For OPT-125M, $k=6$. \\
    $B$ & -- & Number of block of LLM. For OPT-125M, $k=12$. \\
    $L$ & -- & Number of linear layers (weight matrices) of LLM $L=Bk$. For OPT-125M, $L=72$. \\
    $l$ & -- & Index of one linear layer from LLM, $l \in \{1, \cdots, L\}$ \\
    $W_l$ & -- & One linear layer (matrix) of LLM, $W_l \in \R^{d^l_{in} \times d^l_{out}}$. \\
    ${\cal R}_l$ & -- & Reshaping operator, reshaping a matrix into a flat vector, ${\cal R}_l : \R^{d_{in}^l \times d_{out}^l}  \to \R^{d_{in}^l \cdot d_{out}^l}$ \\
    ${\bf w}_l$ & -- & Flat representation of one linear layer (matrix) of LLM, ${\bf w}_l ={\cal R}_l(W_l)$. \\
    $d$ & -- & Total number of parameters of LLM. \\
    $b$ & -- & Batch size for the input of LLM. \\
    $s$ & -- & Output sequence length of the LLM. \\
    $X$ & -- & input matrix for LLM, $X \in \R^{b \times l}$ \\
    $f$ & -- & Mapping that represents the action of LLM on the input $X$, $f:\R^{b\times l} \to \R^{b\times l\times n}$. \\
    $\phi$ & -- & Perplexity function of LLM. \\
    ${\bf e}_i$ & -- & $i^{\text{th}}$ unit vector, ${\bf e}_i \in \R^{d}$, $i \in \{1,\cdots, d\}$ \\
    $\odot$ & -- & Element-wise multiplication of vectors/matrices. \\
\bottomrule
\end{tabular}
\end{table}

\clearpage
\bibliography{references}
\bibliographystyle{plainnat}

\end{document}